\documentclass{article}

\usepackage{microtype}
\usepackage{graphicx}
\usepackage{subcaption}
\usepackage{booktabs} 
\usepackage{natbib}
\usepackage{hyperref}

\usepackage[preprint]{icml2026}

\usepackage{amsmath}
\usepackage{amssymb}
\usepackage{mathtools}
\usepackage{amsthm}
\usepackage{amsfonts}
\usepackage[utf8]{inputenc}

\usepackage{enumitem}
\usepackage{orcidlink}

\usepackage[capitalize,noabbrev]{cleveref}

\theoremstyle{plain}
\newtheorem{theorem}{Theorem}[section]
\newtheorem{proposition}[theorem]{Proposition}

\theoremstyle{definition}

\theoremstyle{remark}

\usepackage[textsize=tiny]{todonotes}

\usepackage[accsupp]{axessibility}  
\usepackage{multirow}
\usepackage{siunitx}
\usepackage{colortbl}

\usepackage{epsfig}
\usepackage{makecell}
\usepackage[svgnames]{xcolor}
\usepackage [english]{babel}

\usepackage{titlesec}
\titlespacing*{\section}{0pt}{0.7ex}{0.4ex}
\titlespacing*{\subsection}{0pt}{0.5ex}{0.4ex}
\titlespacing*{\subsubsection}{0pt}{0.4ex}{0.3ex}

\setlist{nosep}
\newcommand{\objectdomain}{\Omega}
\newcommand{\objectboundary}{\partial \Omega}
\newcommand{\greenfunction}{G}
\newcommand{\potential}{\varphi}
\newcommand{\greenfunctiondef}{- \frac{1}{4 \pi \|\vx - \vs \|}}
\newcommand{\chargedensity}{\rho}
\newcommand{\numsurfacesamples}{100k}
\newcommand{\predresolution}{512}

\newcommand{\laplacian}{\Delta}
\newcommand{\lossbalancingweight}{\expnumber{2}{-2}}

\newcommand{\expnumber}[2]{{#1} \times 10^{#2}}

\usepackage{amsmath,amsfonts,bm}

\def\eqref#1{equation~\ref{#1}}

\def\1{\bm{1}}

\def\vs{{\bm{s}}}

\def\vx{{\bm{x}}}

\def\mH{{\bm{H}}}

\def\mL{{\bm{L}}}

\DeclareMathAlphabet{\mathsfit}{\encodingdefault}{\sfdefault}{m}{sl}
\SetMathAlphabet{\mathsfit}{bold}{\encodingdefault}{\sfdefault}{bx}{n}

\def\gF{{\mathcal{F}}}

\def\gX{{\mathcal{X}}}
\def\gY{{\mathcal{Y}}}

\def\sR{{\mathbb{R}}}

\newcommand{\R}{\mathbb{R}}

\newcommand{\FT}[1]{ \gF \left\{ #1 \right\}}
\definecolor{bestorange}{RGB}{255,200,140}      
\definecolor{secondorange}{RGB}{255,230,180}    

\newcommand{\customsubsection}[1]{\vspace{5pt} \textbf{#1}.}

\newcommand{\topvalue}[1]{\cellcolor{bestorange}\textbf{#1}}
\newcommand{\runnerupvalue}[1]{\cellcolor{secondorange}{#1}}

\icmltitlerunning{Electrostatics-Inspired Surface Reconstruction (EISR): Recovering 3D Shapes as a Superposition of Poisson's PDE Solutions}

\begin{document}

\twocolumn[
  \icmltitle{Electrostatics-Inspired Surface Reconstruction (EISR): Recovering 3D Shapes as a Superposition of Poisson's PDE Solutions}



  \icmlsetsymbol{equal}{*}

  \begin{icmlauthorlist}
    \icmlauthor{Diego Patiño}{uta}
    \icmlauthor{Knut Peterson}{drexel}
    \icmlauthor{Kostas Daniilidis}{upenn}
    \icmlauthor{David K. Han}{drexel}
  \end{icmlauthorlist}

  \icmlaffiliation{uta}{Department of Computer Science and Engineering, University of Texas - Arlington, Arlington, TX, USA}
  \icmlaffiliation{drexel}{Department of Electrical and Computer Engineering, Drexel University, Philadelphia, PA, USA}
  \icmlaffiliation{upenn}{Department of Computer and Information Science, University of Pennsylvania, Philadelphia, PA, USA}

  \icmlcorrespondingauthor{Diego Patiño}{diego.patino@uta.edu}

  \icmlkeywords{Machine Learning, ICML}

  \vskip 0.3in
]



\printAffiliationsAndNotice{}  

\begin{abstract}
Implicit shape representation, such as SDFs, is a popular approach to recover the surface of a 3D shape as the level sets of a scalar field. Several methods approximate SDFs using machine learning strategies that exploit the knowledge that SDFs are solutions of the Eikonal partial differential equation (PDEs). In this work, we present a novel approach to surface reconstruction by encoding it as a solution to a proxy PDE, namely Poisson's equation. Then, we explore the connection between Poisson's equation and physics, e.g., the electrostatic potential due to a positive charge density. We employ Green's functions to obtain a closed-form parametric expression for the PDE's solution, and leverage the linearity of our proxy PDE to find the target shape's implicit field as a superposition of solutions. Our method shows improved results in approximating high-frequency details, even with a small number of shape priors.

\end{abstract}

\section{Introduction}

In recent years, data-driven surface reconstruction based on machine learning methods has produced a wide variety of generative methods with increasing accuracy for computing the 3D geometry of shapes from different inputs, such as point clouds or 2D images~\citep{Achlioptas2017RepresentationLA,FanSG16,Ranjan2018Generating3F,Wang2018Pixel2MeshG3,pifuSHNMKL19}. One of the methods for surface reconstruction that has gained momentum is implicit surface reconstruction (ISR)~\citep{Xu2019_NIPS,pifuSHNMKL19,Gropp2020ImplicitGR,Mescheder2019_OCCNET}. In ISR, we estimate a shape's surface as a continuous scalar field $\potential(\vx): \sR^{3} \rightarrow \sR$ to recover the geometry of a target shape $\objectboundary$ as the iso-surface of $\potential(\vx)$, such that
\begin{equation}
    \objectboundary = \left\{ \vx \in \sR^{3} \; | \; \potential(\vx) = \tau \right\}.
\end{equation}
The most common strategy for implicit surface reconstruction consists of setting $\potential(\vx)$ as the Signed Distance Function (SDF) associated with the surface of $\objectdomain$, and setting $\tau=0$ to recover the zero-crossing surface. Implicit representations often take the form of a deep neural network that approximates the SDF around the target shape~\citep{Genova2019,Michalkiewicz_2019_ICCV,OccupancyFlow,Xu2019_NIPS}. Other methods, instead, learn a decision boundary of a binary classifier~\citep{Chen2018,Mescheder2019_OCCNET,He2020GeoPIFuGA}, indicating whether a point $\vx$ is inside the surface. 

ISR has become a state-of-the-art (SotA) method since 1) it can estimate shapes of any topology (number of holes), and 2) we can sample it at arbitrary resolution to recover high-resolution meshes. However, despite their remarkable reconstruction accuracy, they have a significant drawback: they require substantial supervision. ISR methods require sufficient point-distance sample pairs $(\vx, s) \in \sR^{3}  \times \sR$ around the target shape to properly learn the SDF. This amount of supervision is infeasible in many real-world scenarios.

An alternative to the stated supervision problem comes from the known fact that an SDF is a solution of a non-linear partial differential equation (PDE) called the Eikonal equation:
\begin{equation}
\| \nabla \potential(\vx) \| = 1.
\label{eq:eikonal}
\end{equation}
Hence, one can approximate the scalar field $\potential(\vx)$ by solving Eq. \ref{eq:eikonal}. However, Eq. \ref{eq:eikonal} is challenging to solve due to its non-linear nature. Typically, SDF-based ISR methods approximate the scalar field while enforcing a PDE-loss function that regularizes the approximation to keep it close to the Eikonal equation's space of solutions. Despite their success, these methods often depend on finding an optimal trade-off between competing losses, e.g., reconstruction loss and Eikonal loss~\citep{Gropp2020ImplicitGR}. Consequently, they experience a delay in convergence, limiting the reconstruction quality achievable with a short optimization time.

The above difficulties motivate additional research on PDE-based alternatives to model scalar distance fields with similar properties, fewer supervision requirements, and potentially easier to solve. This work proposes an alternative representation of the implicit scalar field $\potential(\vx)$ by encoding it using an alternative PDE. We take inspiration from physics's electrostatic theory and replace the Eikonal equation with Poisson's equation. 

Poisson's equation is a fundamental principle in mathematical physics. It describes the distribution of electrostatic potential in a given spatial domain based on its charge densities. In 3D reconstruction, Poisson's equation serves as a powerful tool for inferring missing geometric information from sparse data points or noisy measurements. It has been used extensively in computer vision, as it arises naturally in variational problems such as rendering high dynamic range images~\citep{Fattal2002}, shadow removal~\citep{Finlayson2002}, image stitching~\citep{Agarwala2007}, image inpainting~\citep{Elder2001,Perez2003,Jia2006}, shape analysis~\citep{Kazhdan2006,Gorelick2006}, and mesh optimization~\citep{Nealen2006}. 


\begin{figure*}
\centering
\includegraphics[width=1.0\textwidth]{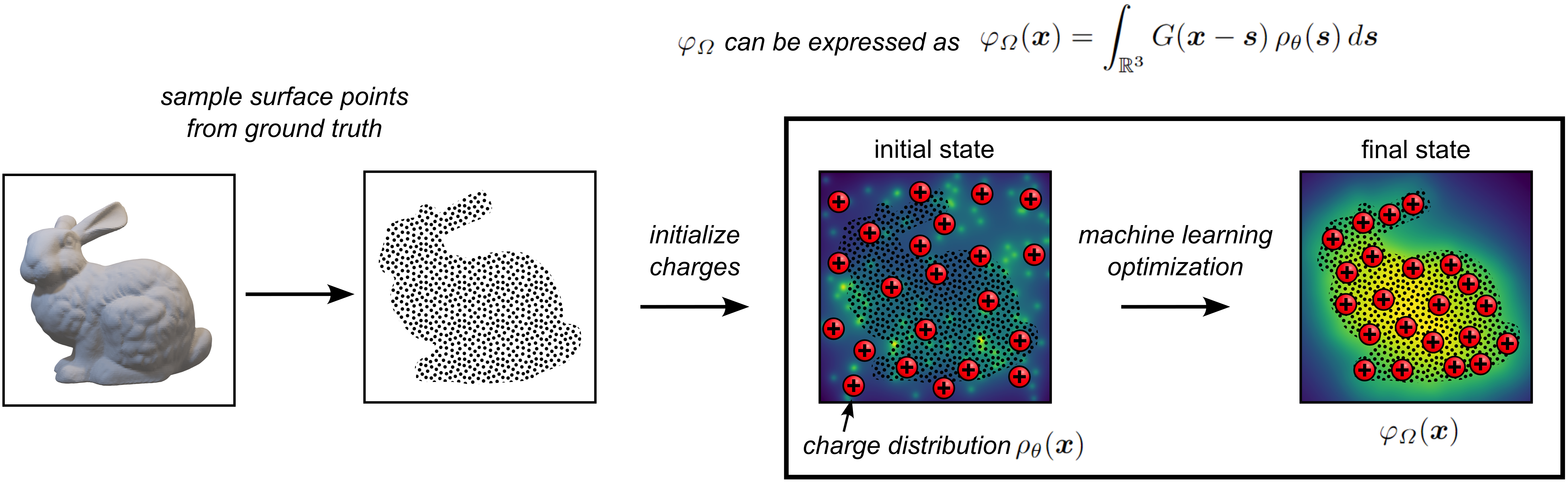}
\caption{Overview of our proposed method. We take inspiration from electrostatic theory and represent the implicit scalar field to reconstruct a target shape as a solution of Poisson's equation. We solve the PDE analytically using Green's functions and parametrize the charge density function to fit the desired shape.}
\label{fig:overview}
\end{figure*}

In this work, we propose reconstructing a target shape $\objectdomain$ by finding an implicit scalar field that is the electrostatic potential of an everywhere-positive charge distribution within it. We adopt Poisson's equation as a substitute for the Eikonal equation and solve it analytically using Green's functions. Through careful derivation, we show how to express a solution of Poisson's PDE as the convolution of the Green function and a parametric charge density function $\chargedensity(\vx)$. Following our derivation, we find a closed-form expression for the PDE's solution in terms of the charge density, which we parameterize as a series of isotropic Gaussian \textit{charge} densities -- shape priors -- to recover the surface of the desired shape. By leveraging the linearity of the Poisson equation, we represent the implicit field of a target shape as the superposition of the electrostatic potential of the above-mentioned shape priors. Finally, we adopt a machine learning approach to optimize the charge parameters. Fig. \ref{fig:overview} briefly summarizes our method's working principle. The contributions of our surface representation method are threefold:

\begin{itemize}[left=1pt]
    \item We find a proxy PDE as a replacement for the Eikonal equation, avoiding costly loss functions when approximating the scalar field in implicit shape representation. 
    \item We leverage the linearity of our proxy PDE to solve it analytically using Green's functions and to express the target shape as a superposition of solutions to the PDE.
    \item We demonstrate the capabilities of our shape representation to approximate high-frequency details of the target shape with a relatively small number of parameters.
\end{itemize}

\section{Related Work}

\customsubsection{Deep Learning-based Surface Reconstruction} 

Deep learning has enabled learning-based representations to reconstruct 3D shapes from point clouds or images~\citep{Achlioptas2017RepresentationLA,FanSG16,Ranjan2018Generating3F,Wang2018Pixel2MeshG3,pifuSHNMKL19}. The most popular class of these methods is implicit surface reconstruction (ISR)~\citep{Xu2019_NIPS,pifuSHNMKL19,Gropp2020ImplicitGR,Mescheder2019_OCCNET}. They are widely utilized due to two properties: they can estimate shapes of any topology  -- number of holes --  and can be sampled at arbitrary resolution to produce high-resolution meshes. 

In ISR, a 3D shape is represented as a level set of a continuous scalar field, often encoded using a coordinate-based neural network (NN). One can extract an artifact-free triangular mesh of the target shape from this field via an iso-surface extraction algorithm, e.g., Marching Cubes. ISR models predict SDFs~\citep{Atzmon2019SALSA,Gropp2020ImplicitGR,Michalkiewicz_2019_ICCV,Park2019,Takikawa2021NeuralGL} or occupancy probabilities~\citep{Chen2018,OccupancyFlow,Mescheder2019_OCCNET} from partial observations.

Most recently, neural radiance fields (NeRF) ~\citep{Mildenhall2020NeRFRS} have advanced 3D scene representation by encoding a scene's radiance field with deep NNs and then employing volumetric rendering to enable novel-view synthesis. Although NeRF has become a ubiquitous method for both 3D reconstruction and generation ~\citep{Barron2021MipNeRF3U,Li2023NeuralangeloHN,chen2022mobilenerf,Poole2022DreamFusionTU,Chan2021EfficientG3}, optimizing NeRF can be time-consuming. Moreover, recovering the surface from NeRFs is not straightforward and often requires cumbersome strategies to enforce surface reconstruction via SDFs and the Eikonal equation.

\customsubsection{Reconstructing 3D Objects with Geometric Priors} 
Implicit function-based surface representations encode shape priors in neural networks and latent embeddings~\citep{Genova2019LocalDI,Yavartanoo20213DIAS3S}. However, they struggle with high-frequency details and novel classes~\citep{Sitzmann2020ImplicitNR,Genova2019LocalDI}. Previous work uses primitive-based representations to approximate complex shapes as combinations of simple primitives, e.g., cubes~\citep{Tulsiani2016LearningSA}, ellipsoids~\citep{Genova2019}, superquadrics~\citep{Paschalidou2019SuperquadricsRL}, convex polytopes~\citep{Deng2019CvxNetLC}, or polynomial zero-level sets~\citep{Yavartanoo20213DIAS3S}. However, combining primitives into a unified implicit function remains unclear. Consequently, these methods often introduce additional loss functions and refining networks to produce a coherent implicit field for surface reconstruction~\citep{Genova2019LocalDI,Yavartanoo20213DIAS3S,chen2023textto3d,Tang2023DreamGaussianGG,yi2023gaussiandreamer}. Moreover, primitives are increasingly being used for articulated shapes~\citep{Lei2023GARTGA,Li2024GaussianBodyCH,Yuan2023GAvatarA3} and novel-view rendering with Gaussian splatting~\citep{kerbl3Dgaussians,Keselman2022ApproximateDR,luiten2023dynamic,yang2023deformable3dgs}. However, these methods require many shape priors for achieving high-fidelity 3D reconstructions. 

\customsubsection{Physics-Informed Machine Learning (PIML)} Deep learning methods have enabled high-accuracy surface reconstruction. However, current methods rely on complex architectures trained on large datasets. Recently, PIML has emerged as a potential solution to reduce the complexity of such architectures~\citep{Raissi2019PINN,Banerjee2023PhysicsInformedCV, Kashinath2021,meng2022physics}. The basis of PIML is to incorporates physics models and observational data to guide learning, improving robustness, accuracy, and convergence~\citep{Karniadakis2021PhysicsinformedML,meng2022physics,Cuomo2022,Hao2022PhysicsInformedML,Kashinath2021}. The introduction of physical information into machine learning frameworks encoded as partial differential equations (PDEs), symmetry constraints, or conservation laws has started opening and transforming many application domains~\citep{Hao2022PhysicsInformedML,Cranmer2020LagrangianNN}. Examples of these methods include fluid dynamics~\citep{Brunton2020MLforFluidMechanics}, physics discovery~\citep{Champion2020,Baddoo2021PhysicsinformedDM}, NeuralODEs~\citep{Chen2018NeuralOD}, human pose estimation, and geometry-aware 3D reconstruction~\citep{Gartner2022DifferentiableDF,Grtner2022TrajectoryOF}, geometry aware 3D reconstruction using NeRFs~\citep{Kosiorek2021NeRFVAEAG} or SDFs~\citep{Gropp2020ImplicitGR,Marschner2023}. These methods use PDEs, ODEs, or physics laws as regularizers or inductive biases.

In this work, we study the problem of reconstructing 3D objects with geometric priors from a physically inspired approach based on the well-studied phenomenon of electrostatic potential. We aim to reconstruct an object's geometry by assuming its surface is an isosurface of the electric potential generated by a distribution of positive charges strategically placed inside the object. We leverage the linearity of the PDE governing the electric potential to represent the target shape as a linear combination of solutions to the PDE for individual Gaussian charge densities. 
\section{Proposed Method}

Consider a bounded domain $\objectdomain \in \sR^{3}$, along with its associated scalar field $\potential(\vx): \sR^{3} \mapsto \sR$ implicitly representing the target 3D shape as one of its iso-surfaces. Similar to ~\citep{Gorelick2006,Aubert2014}, we approximate $\potential(\vx)$ as a solution of a PDE. However, we draw inspiration from physics' \textit{electrostatic theory} and replace the familiar Eikonal PDE with Poisson's equation. We then solve the PDE analytically and learn a parameterization of the charge density via a machine-learning approach to reconstruct the target's surface.

\subsection{Electrostatic Potential-based SDF}
A popular choice in the surface reconstruction SotA is to model a scalar field $\potential(\vx)$ encoding the shape as an SDF while explicitly enforcing it to be a solution of the Eikonal partial differential equation using regularization loss functions. Inspired by previous work~\citep{Gorelick2006,Aubert2014}, we propose an alternative model for $\potential(\vx)$ as the electrostatic potential of a positive charge density governed by Poisson's PDE:
\begin{equation}
    \Delta \potential(\vx) = - \frac{\chargedensity(\vx)}{\epsilon_{0}},
    \label{eq:poisson_eq}    
\end{equation}
conditioned to satisfy
\begin{align}    
    \rho(\vx) &= \tau , \quad \text{for }  \vx \in \objectboundary \nonumber \\ 
    \chargedensity(\vx)  &> 0 , \quad \text{for } \vx \in \sR^{3}. \label{eq:poisson_eq_conditions}    
\end{align}
In Eq. \ref{eq:poisson_eq} and \ref{eq:poisson_eq_conditions}, the differential operator $\laplacian$ stands for the Laplacian, $\chargedensity(\vx)$ is the charge density at each $\vx \in \R^{3}$, $\tau$ is a constant positive value, and $\epsilon_{0}$ is the permittivity of the medium. In this work, we set $\epsilon_{0} = 1$ without any loss of generality. Additionally, it is worth mentioning that solutions to Eq. \ref{eq:poisson_eq} can be proven to be always unique and smooth given just Dirichlet boundary conditions, in contrast with the Eikonal equation.

Before attempting to approximate $\potential$, we need to show that Eq. \ref{eq:poisson_eq} is a suitable alternative to replace the SDF of $\objectdomain$, at least for the task of reconstructing its boundary via iso-surface extraction. To this end, $\potential(\vx)$ needs to satisfy two requirements that solutions of the Eikonal equation satisfy, too. First, $\potential(\vx)$ must have the same constant value $\tau$ at all points on $\objectboundary$ -- the desired iso-surface. Second, we require $\potential(\vx)$ to be positive on all points inside the domain $\objectdomain$ and negative outside, as with SDFs. This second requirement is essential when recovering the surface as a triangular mesh using an iso-surface extraction algorithm such as Marching Cubes~\cite{Lorensen1987}. 

Our electrostatics-inspired approach satisfies the first requirement because it is subject to the boundary conditions in Eq. \ref{eq:poisson_eq_conditions}. Satisfying the second requirement arises from a well-known property of Poisson's equation solutions, \textit{the minimum principle}. The minimum principle states that if $\potential(\vx)$ is a solution of Eq. \ref{eq:poisson_eq}, and $\objectboundary$ is the boundary of a closed region, then if $$- \frac{\chargedensity(\vx)}{\epsilon_{0}} < 0, \; \forall \vx \in \objectdomain,$$ $\potential(\vx)$ takes its minimum value in $\objectboundary$ and not in its interior. Hence, all scalar field values inside a level-set region are greater than the values on the level-set. We provide a short mathematical proof of the minimum principle in Appx. \ref{appx:min_principle}. Note that by leveraging the linearity of Poisson's equation, we enable the use of useful mathematical tools to solve the PDE. These tools were previously unavailable for modeling $\potential(\vx)$ when using the Eikonal equation.

\subsection{Solving Poisson Equation with Green's Functions}
Our goal is to find a scalar field $\potential: \sR^{3} \mapsto \sR$ that implicitly represents the geometry of a target shape $\objectdomain$, encoded as a solution of the electrostatic potential of an always-positive charge density $\chargedensity(\vx)$ as per Eq. \ref{eq:poisson_eq}, subject to the corresponding boundary conditions in Eq. \ref{eq:poisson_eq_conditions}. Because $\laplacian$ is a linear operator, the PDE accepts solutions through the use of Green's functions. Green's functions are the impulse response of a non-homogeneous linear differential operators. We can interpret Green's functions as an integral kernel that is the inverse of the operator~\citep{duffy2001green} and thus can be used to find $\potential(\vx)$ by solving first the associated PDE
\begin{equation}
\laplacian G(\vx, \vs) = \delta (\vx - \vs),
\label{eq:green_function_pde}
\end{equation}
where $\delta$ is the Dirac delta function translated by $\vs$ and $G$ is the Green's function for the Laplacian. 

Since $\laplacian$ is translation-invariant, we can write $G$ as a convolution kernel such that $G(\vx, \vs)= G(\vx - \vs)$. Following this result, we multiply Eq. \ref{eq:green_function_pde} by $\chargedensity(\vs)$ on both sides, and then integrate over $\sR^{3}$ with respect to $\vs$. Next, we take advantage of the Laplacian's linearity and interchange $\laplacian$ with the integral sign to obtain
\begin{align}
\laplacian G(\vx - \vs) &= \delta(\vx - \vs) \nonumber \\
\int_{\sR^{3}} \laplacian G(\vx - \vs) \chargedensity(\vs) d\vs &= \int_{\sR^{3}} \delta (\vx - \vs) \chargedensity(\vs) d\vs. \nonumber \\
\laplacian \left[ \int_{\sR^{3}} G(\vx - \vs) \chargedensity(\vs) d\vs \right] &= \int_{\sR^{3}} \delta (\vx - \vs) \chargedensity(\vs) d\vs. \nonumber \\
\laplacian \left[ \left( G \ast \chargedensity \right) (\vx) \right] &= \chargedensity(\vx) \label{eq:conv_solution}.
\end{align}
Consequently, we can write the solution to the PDE in Eq. \ref{eq:poisson_eq} -- and thus find our target scalar field -- as the convolution of the Green's function with the charge density, $\chargedensity(\vx)$.

However, we are not interested in finding just any $\potential(\vx)$, but one that can act as an implicit scalar field for reconstructing $\objectdomain$. To this end, we construct a parametric approximation of the charge density, e.g., parametrized with a neural network. Later, we write the implicit scalar field following Eq. \ref{eq:conv_solution} as 
\begin{equation}
\potential_{\objectdomain}(\vx) = - \frac{1}{\epsilon_{0}} \int_{\sR^{3}} G(\vx - \vs)\,\chargedensity_{\theta}(\vs)\,d\vs,
\label{eq:potential_solution}
\end{equation}
where $\potential_{\objectdomain}$ is a solution of our target PDE for a charge density $\chargedensity_{\theta}(\vs)$ parameterized with $\theta$, and boundary conditions $\potential_{\objectdomain}(\vx) = \tau$ along $\objectboundary$ for a constant value $\tau > 0$.

\subsection{Surface Reconstruction as a Linear Combination of Gaussian Charges}
Because Poisson's equation has been extensively studied, the Green's function for the Laplacian is a known result of the form
\begin{equation}
   \greenfunction(\vx, \vs) = \greenfunctiondef.
   \label{eq:laplacian_greens_function}
\end{equation}
However, evaluating $\potential$ at a point $\vx$ is challenging, as it requires computing the integral each time, which is neither straightforward nor efficient. For an arbitrary parametrization of $\chargedensity(\vx)$, the computation of $\potential$ becomes infeasible because we need to evaluate it on each point in a 3-dimensional fine-grained grid in order to use the Marching Cubes algorithm for surface reconstruction.  

To overcome the above problem, let us consider an isotropic Gaussian-shaped charge density 
\begin{equation}
\displaystyle \chargedensity _{\theta}(\vx)={\frac {Q}{\sigma ^{3}{\sqrt {2\pi }}^{3}}}\,\exp \left( - \frac{  (\vx - \vs)^{\intercal}(\vx - \vs) }{2\sigma ^{2}} \right),
\label{eq:guassian_charge_dist}
\end{equation}
where $\vs$ denotes the charge location, $Q$ is a positive charge magnitude, and $\sigma$ is the charge density spread. By adopting this parameterization of the charge density, we can work out the integral in Eq. \ref{eq:potential_solution} to find a simplified expression for the electrostatic potential of a single Gaussian-distributed charge as
\begin{equation}
    \potential(\vx)={\frac {1}{4\pi \varepsilon_0 }}{\frac {Q}{\| \vx - \vs \|}}\operatorname{erf} \left({\frac {\| \vx - \vs \|}{{\sqrt {2}}\sigma }}\right),
    \label{eq:shape_prior_solution}
\end{equation}
with $$\displaystyle \operatorname {erf} (z) ={\frac {2}{\sqrt {\pi }}}\int _{0}^{z}e^{-t^{2}}\,\mathrm {d} t.$$
The reader can explicitly check the validity of Eq. \ref{eq:shape_prior_solution} as a solution of the target PDE by computing $\laplacian \potential(\vx)$. 

We have found a parametric formula in Eq. \ref{eq:shape_prior_solution} for an implicit scalar field whose iso-surfaces are spheres. However, we are concerned with representing 3D shapes that are likely far more complicated. To achieve this goal, we leverage the linearity of Poisson's equation to seamlessly combine $K$ shape priors into a superposition of solutions of the PDE. We parametrize each prior with $\theta_{i} = \left\{ \vs_{i}, Q_{i}, \sigma_{i} \right\}$, such that
{\small
\begin{equation}
    \potential_{\objectdomain}(\vx) = \sum_{i=1}^{K} \potential_{i}(\vx) = \sum_{i=1}^{K} {\frac {1}{4\pi \varepsilon_0 }}{\frac {Q_{i}}{\| \vx - \vs_{i} \!\|}}\operatorname {erf} \left({\frac {\| \vx - \vs_{i}\!\|}{{\sqrt {2}}\sigma_{i} }}\right).
    \label{eq:potential_of_the_object_object}
\end{equation}}
The last step in our method is to estimate $K$ Gaussian charge distributions to approximate the target shape $\objectdomain$. 

\subsection{Learning the Charge Distribution}
To reconstruct the surface with our method, we optimize the charge parameters to fit the target shape $\objectdomain$. To this end, we guide our method with two loss functions: a boundary-condition loss and a charge-restriction loss.

Let $\gX \subset \objectboundary$ be a set of points sampled from the target object's surface. We define the boundary condition loss as
\begin{equation}
   \mathcal{L}_{bc} = \frac{1}{|\gX|} \sum _{\vx \in \gX} \| \potential_{\objectdomain}(\vx) - \tau \|^{2},\, \tau > 0.
\end{equation}
We use $\mathcal{L}_{bc}$ to enforce $\potential_{\objectdomain}(\vx)$ to have an iso-surface at a value $\tau$ on all points belonging to $\objectboundary$. 

Additionally, recall that one fundamental property of the desired implicit field is that $\potential_{\objectdomain}(\vx) > \tau, \; \forall \vx \in \objectdomain$. This property ensures the feasibility of Marching Cubes in recovering the object's shape as a triangular mesh. Unfortunately, this property is not fully satisfied when the charges are outside $\objectdomain$. To avoid this situation, we define the charge restriction loss to enforce the charges to be inside the shape, such that
\begin{equation}
   \mathcal{L}_{cr} = \frac{1}{K} \sum _{i = 1}^{K} d_{\operatorname{min}}\left( \vs_{i}, \gY \right)^{2},
\end{equation}
where $d_{\operatorname{min}}\left( \vs_{i}, \gY \right)$ is the minimum distance between a charge $\vs_{i}$ and a set of points $\gY \subset \objectdomain$. Finally, we train our method by jointly optimizing $\mathcal{L}_{bc}$ and $\mathcal{L}_{cr}$ with a balance weight set experimentally to $\lambda = \lossbalancingweight$. We define the total loss function as
\begin{equation}
    \mathcal{L} = \mathcal{L}_{bc} + \lambda \mathcal{L}_{cr}.
\end{equation}

Notice that, in contrast with other methods in the SotA, we do not require enforcing a PDE regularization loss, since we explicitly compute its solution. Hence, our formulation allows the loss functions to focus solely on recovering the target shape's geometry.

\section{Experiments}

\subsection{Experimental Setup}

\customsubsection{Datasets and Baselines}
We compare our approach with the Implicit Geometric Regularization (IGR) method~\citep{Gropp2020ImplicitGR}, and with the classical Poisson Reconstruction~\cite{Kazhdan2006}. We use the dataset employed in IGR provided by ~\citep{Williams2018DeepGP}. Additionally, we evaluate on a subset of scanned objects from the classical Stanford 3D Scanning Repository. These datasets comprise triangular watertight meshes computed from registered point clouds acquired from multiple views of real-world objects. 

\customsubsection{Evaluation Metrics}
We follow standard metrics in surface reconstruction ~\citep{Park2019,Mescheder2019_OCCNET,yariv2021volume,Yu2022MonoSDF,Tatarchenko2019WhatDS,Knapitsch2017TanksAT}, and report Chamfer distance ($d_C$), F1 score, normal consistency (NC), the Hausdorff distance ($d_H$), and intersection over union (IoU). To compute the metrics, we sample $\numsurfacesamples$ points and their normals on the surfaces of both the predicted and ground-truth meshes. For the IoU, we rasterize the predictions and the ground-truth mesh at a resolution of $\predresolution$.

\customsubsection{Implementation Details}
We implement our code in \texttt{PyTorch} using Adam~\citep{Kingma2014AdamAM} to optimize each object's charges for $60k$ steps using cosine-annealed learning rates from $lr = \expnumber{1}{-3}$ to $lr=\expnumber{1}{-7}$. Before optimizing the charges, we randomly initialize their parameters such that in the first iteration, $Q_{i} = \expnumber{1}{-7}$, the charge locations $\vs_{i}$ are uniformly distributed in the interval $[-0.5, 0.5]^{3}$, and the charge spread $\sigma_{i}$ follow a normal distribution centered a zero. To compute the loss functions, we extract $250k$ points from the ground truth surface and sample $16k$ of them at each training iteration to optimize the loss. We balance the boundary condition loss and the charge restriction loss by setting the $\lambda = \lossbalancingweight$. We optimize all models on a single NVIDIA H10 GPU. Optimizing each model takes approximately $8$ minutes for $5k$ charges and $30$ minutes for $50k$ charges. We set the threshold to $\tau = 1.0$ in all our experiments. After review and upon acceptance, we will make the source code available.

\subsection{Surface Reconstruction Accuracy}

We experiment with varying the number of charges during the optimization stage and analyze its effect on the reconstructed surface quality. We summarize our results in Fig. \ref{fig:varying_num_charges} and Tab. \ref{tab:varying_num_charges}. We report the intersection over union, absolute normal consistency, and the F1 score with a threshold of $1\%$ on the shape's maximum side length~\citep{Tatarchenko2019WhatDS}.

\begin{figure*}
\centering
\includegraphics[width=0.9\textwidth]{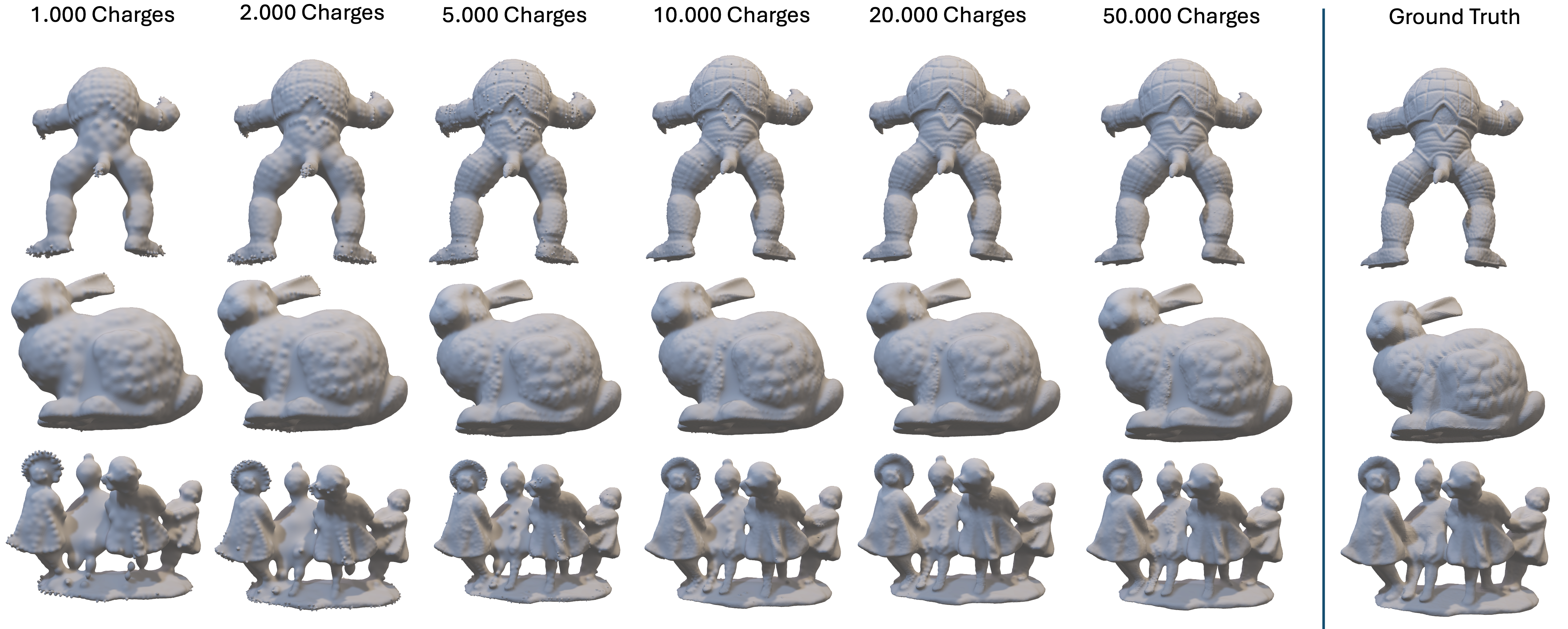}
\caption{Surface reconstruction vs. number of charges. Our method achieves accurate results with only $1k$ charges when reconstructing a shape's overall geometry. Increasing the charges significantly improves the reconstruction's high-frequency details.}
\label{fig:varying_num_charges} 
\end{figure*}

\begin{table*}
    \centering
    \setlength{\tabcolsep}{1.9pt}
    \caption{Surface reconstruction with different numbers of Gaussian charges (shape priors) on a subset of the Stanford 3D Scanning repository. Our method shows significant results for all the metrics, even at a lower number of shape priors. See full table in Appx.~\ref{apx:varying_gaussians}.}
    \label{tab:varying_num_charges}
    \scriptsize
    \begin{tabular}{lccccccccccccccccccc}
        \toprule
        \textbf{Model}  & \multicolumn{3}{c}{\textbf{1000 Charges}} & \multicolumn{3}{c}{\textbf{2000 Charges}} & \multicolumn{3}{c}{\textbf{5000 Charges}} & \multicolumn{3}{c}{\textbf{10000 Charges}} & \multicolumn{3}{c}{\textbf{20000 Charges}} & \multicolumn{3}{c}{\textbf{50000 Charges}} & \textbf{\# verts.}\\
        \cmidrule(lr){2-4}\cmidrule(lr){5-7}\cmidrule(lr){8-10}\cmidrule(lr){11-13}\cmidrule(lr){14-16}\cmidrule(lr){17-19}
         & F1($\uparrow$) & IoU($\uparrow$) & NC($\uparrow$) & F1($\uparrow$) & IoU($\uparrow$) & NC($\uparrow$) & F1($\uparrow$) & IoU($\uparrow$) & NC($\uparrow$) & F1($\uparrow$) & IoU($\uparrow$) & NC($\uparrow$) & F1($\uparrow$) & IoU($\uparrow$) & NC($\uparrow$) & F1($\uparrow$) & IoU($\uparrow$) & NC($\uparrow$) & \\
        \midrule                
        Armadillo & 98.09 & 0.93 & 0.99 & 99.24 & 0.94 & 0.99 & 99.65 & 0.95 & 0.99 & 99.78 & 0.97 & 1.00 & 99.82 & 0.98 & 1.00 & 99.85 & 0.98 & 1.00 & $\expnumber{1.7}{5}$  \\
        Happy Buddha & 91.28 & 0.85 & 0.97 & 93.78 & 0.85 & 0.97 & 95.99 & 0.86 & 0.98 & 96.79 & 0.91 & 0.99 & 97.05 & 0.94 & 0.99 & 97.27 & 0.95 & 0.99 & $\expnumber{5.4}{5}$  \\        
        Asian dragon & 90.84 & 0.80 & 0.96 & 92.04 & 0.77 & 0.96 & 95.24 & 0.76 & 0.97 & 96.24 & 0.76 & 0.97 & 96.46 & 0.82 & 0.98 & 96.83 & 0.86 & 0.98 & $\expnumber{1.9}{7}$  \\
        \bottomrule
    \end{tabular}
\end{table*}

Notice how EISR effectively reconstructs high-frequency details, i.e., the folds on the patterns on the Armadillo's back, and the nose details in the Bunny model. These results demonstrate our method's lightweight representation capabilities, since we achieve good approximations of the overall shapes even with only $5k$ Gaussian charges. With our method, we can progressively obtain more accurate details at a small number of charges relative to the number of vertices in the ground-truth mesh. Note that some reconstructions exhibit bump and pit artifacts. These artifacts occur at low Gaussian counts ($\leq 20000$) because the sparse representation cannot capture fine surface details. However, increasing the number of charges improves surface quality by reducing these irregularities, thereby demonstrating the effectiveness of our shape representation.

Figure~\ref{fig:cross_section} shows a cross-section of the scalar field produced by our method, illustrating the smooth signed-distance structure induced by the learned representation. The zero-level set aligns consistently with the target surface, confirming that our approach recovers a well-behaved implicit surface representation.

\subsection{Comparative Results}

We compared our approach with IGR and Poisson reconstruction, achieving competitive results. Because Poisson reconstruction strongly depends on surface normals, we used two versions of the baseline: ground-truth normals and estimated normals. We compute Poisson reconstructions using Open3D\footnote{Open3d: \url{https://www.open3d.org}} and libigl\footnote{libigl: \url{https://libigl.github.io}} library. We use a simple PCA strategy implemented in Open3D to estimate the normals. While we are aware that more accurate methods exist, we consider them post-processing and therefore exclude them to ensure a fair comparison. Note that our method does not require normal data, since the surface-direction regularization is built into the PDE formulation. We report the Chamfer distance ($d_C$) and the Hausdorff distance ($d_H$). We summarize our results in Fig. \ref{fig:results_stanford_3d} and Tab. \ref{tab:standford_3d_comp}.


\begin{table*}
    \centering
    \setlength{\tabcolsep}{2.8pt}
    \footnotesize    
    \caption{Reconstruction results. We compare our method against IGR~\cite{Gropp2020ImplicitGR} and the classic Poisson reconstruction~\cite{Kazhdan2006}. We report the Chamfer distance $d_{C}$ multiplied by a $\expnumber{1}{2}$ factor, and the Hausdorff distance $d_{H}$ by $\expnumber{1}{1}$. The symbol $^\dagger$ indicates that we use ground-truth surface normals for Poisson reconstruction; otherwise, we estimate them from the input point cloud. We highlight the best and second-best values. Lower values are better.}
    \label{tab:standford_3d_comp}
    \begin{tabular}{lcccccccccc|cccccccccccccccccc}
    \toprule
     & \multicolumn{10}{c}{Stanford 3D Repository} & \multicolumn{10}{c}{Williams et al. dataset} \\
     \textbf{Method} & \multicolumn{2}{c}{\textbf{Bunny}} & \multicolumn{2}{c}{\textbf{Dragon}} & \multicolumn{2}{c}{\textbf{Armadillo}} & \multicolumn{2}{c}{\textbf{Happy B.}} & \multicolumn{2}{c}{\textbf{T. Statue}} & \multicolumn{2}{c}{\textbf{Anchor}} & \multicolumn{2}{c}{\textbf{Daratech}} & \multicolumn{2}{c}{\textbf{Dc}} & \multicolumn{2}{c}{\textbf{Gargoyle}} & \multicolumn{2}{c}{\textbf{Lord Quas}}\\
     & $d_{C}$ & $d_{H}$ & $d_{C}$ & $d_{H}$ & $d_{C}$ & $d_{H}$ & $d_{C}$ & $d_{H}$ & $d_{C}$ & $d_{H}$ & $d_{C}$ & $d_{H}$ & $d_{C}$ & $d_{H}$ & $d_{C}$ & $d_{H}$ & $d_{C}$ & $d_{H}$ & $d_{C}$ & $d_{H}$ \\
        \toprule
         IGR &
         \runnerupvalue{0.53} & 0.53 &
         \runnerupvalue{0.35} & \topvalue{0.13} &
         \runnerupvalue{0.35} & \topvalue{0.06} &
         \topvalue{0.32} & \runnerupvalue{0.10} &
         \runnerupvalue{0.47} & 0.79 &
         1.30 & 1.08 &
         1.72 & 1.13 &
         \runnerupvalue{0.61} & \runnerupvalue{0.36} &
         1.32 & 1.67 &
         0.61 & 0.47 \\

         Poisson rec. &
         1.41 & \runnerupvalue{0.43} &
         1.44 & 0.39 &
         1.49 & 0.45 &
         1.09 & 0.34 &
         0.98 & \runnerupvalue{0.26} &
         1.25 & \runnerupvalue{0.42} &
         0.99 & \runnerupvalue{0.25} &
         1.48 & 1.59 &
         1.56 & \runnerupvalue{0.39} &
         1.22 & 0.41 \\

         $^\dagger$Poisson rec. &
         0.72 & \topvalue{0.36} &
         0.59 & \runnerupvalue{0.20} &
         0.58 & \runnerupvalue{0.08} &
         \runnerupvalue{0.53} & \topvalue{0.09} &
         0.54 & \topvalue{0.10} &
         \runnerupvalue{0.67} & \topvalue{0.11} &
         \runnerupvalue{0.49} & \topvalue{0.07} &
         0.63 & \topvalue{0.08} &
         \runnerupvalue{0.71} & \topvalue{0.10} &
         \runnerupvalue{0.48} & \topvalue{0.06} \\

         \midrule
         EISR (Ours) &
         \topvalue{0.51} & 0.57 &
         \topvalue{0.31} & 0.54 &
         \topvalue{0.29} & 0.22 &
         \topvalue{0.32} & 0.52 &
         \topvalue{0.45} & 0.80 &
         \topvalue{0.42} & 1.72 &
         \topvalue{0.46} & 0.54 &
         \topvalue{0.50} & 0.50 &
         \topvalue{0.47} & 0.69 &
         \topvalue{0.33} & \runnerupvalue{0.36} \\
         \bottomrule
    \end{tabular}
\end{table*}

\begin{figure*}
\centering
\includegraphics[width=1.0\textwidth]{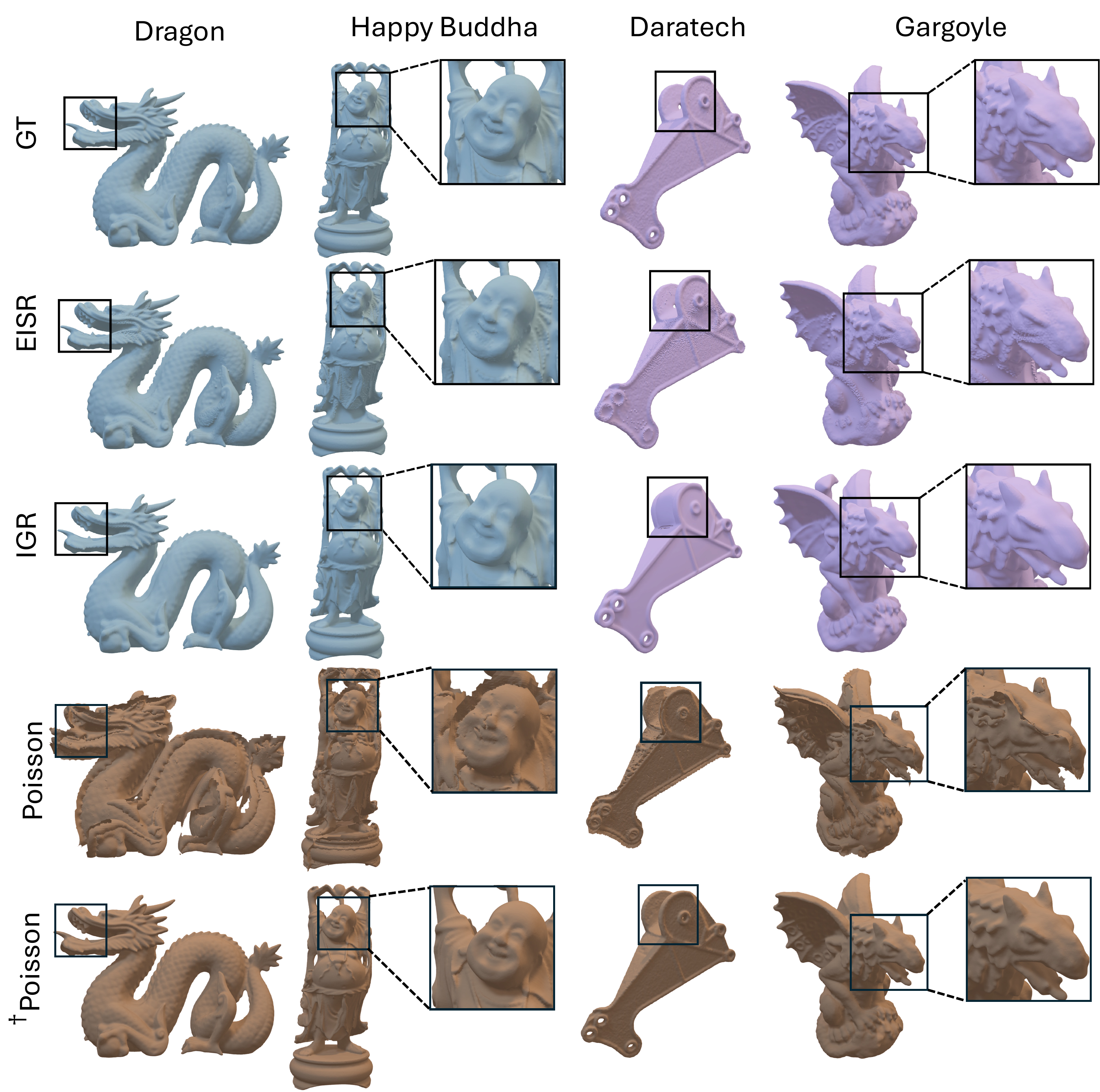}
\caption{Qualitative results on the Stanford 3D Scanning repository and the Williams et al. \citeyearpar{Williams2018DeepGP} dataset employed in IGR ~\citep{Gropp2020ImplicitGR}. Notice how our method effectively captures fine-resolution details from the target shapes.}
\label{fig:results_stanford_3d} 
\end{figure*}

Across both the Stanford 3D Repository and Williams et al. datasets, EISR achieves competitive $d_C$ and $d_H$, surpassing IGR and even Poisson reconstruction with full-oracle surface normals in many cases. While Poisson with ground-truth normals performs well in some cases, it requires privileged information. In contrast, EISR achieves competitive results using only the input point cloud. Minor increases in Hausdorff distance reflect small surface deviations, particularly in regions of high geometric complexity. However, the overall accuracy demonstrates that our Gaussian-based representation provides a robust, interpretable, and compact encoding of 3D shapes.

\subsection{Sensitivity to the Charge Initialization}

\begin{figure*}
    \centering
    \includegraphics[trim={0in 10in 0in 0in},clip,width=0.8\textwidth]{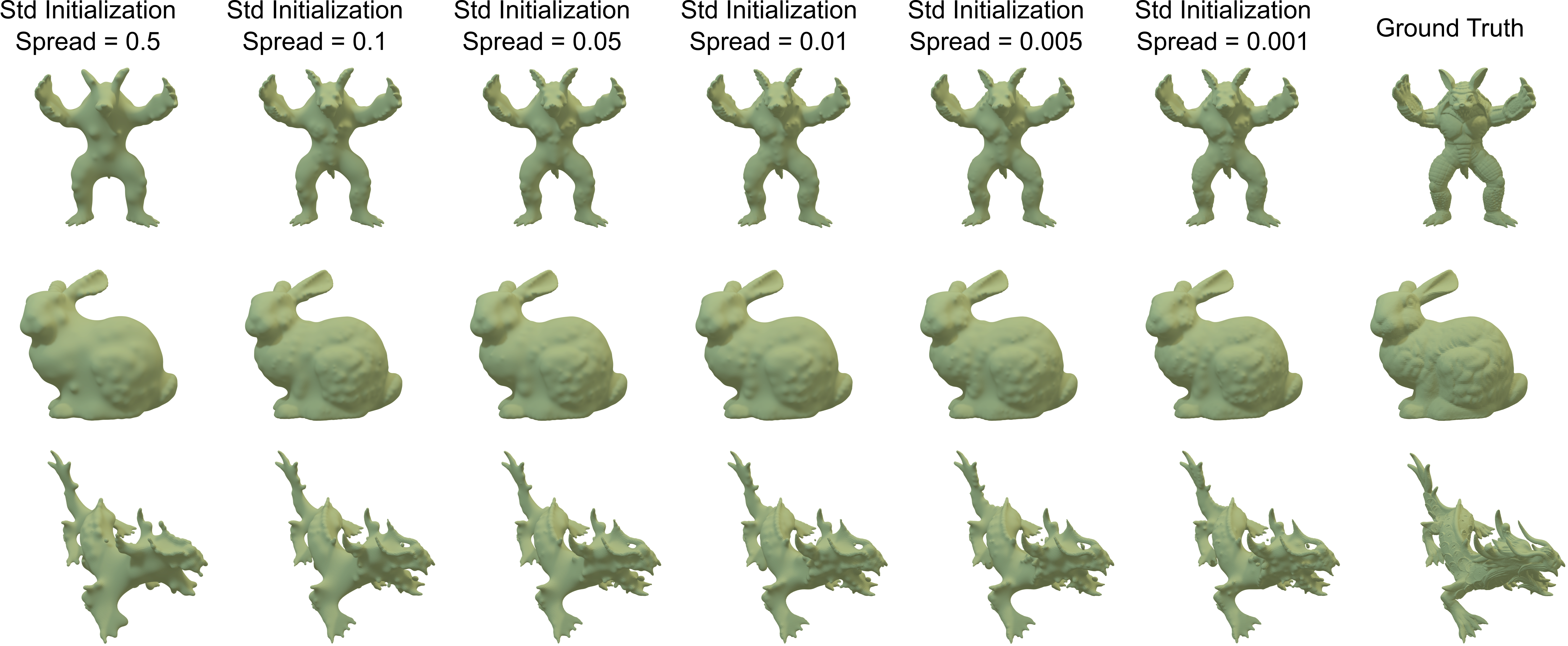}
    \caption{Reconstruction accuracy at different initializations of the charge density spread. We initialize the charges with a normal distribution centered at zero and vary their standard deviation.}
    \label{fig:vayring_initialization}
    \vspace{-15pt}
\end{figure*}
In this experiment, we investigate the behavior of the Gaussian charges in our approach. We vary their initialization and analyze the distribution of their locations after training. Changing the initial charge magnitudes or their initial locations had no noticeable effect after optimization. However, we noticed a significant effect on the quality of the recovered surfaces when we varied the standard deviation of the normal distribution we used to initialize the charge spread. When we allow smaller charge spreads at initialization, the reconstruction favors high-frequency details. In contrast, restricting initialization to larger charges yields an EISR approximation that resembles the target shape overall but lacks detail. We summarize these results in Fig. \ref{fig:vayring_initialization}.

We theorize these results arise from the individual contribution of the charge densities to the energy levels of the Fourier Transform of $\potential(\vx)$. By applying the convolution theorem to Eq. \ref{eq:conv_solution}, and following Eq. \ref{eq:shape_prior_solution}, it is possible to derive an expression for the Fourier Transform and its magnitude such that
\begin{equation}
    | \FT{\potential_{\objectdomain}}(\boldsymbol{\omega}) | = \frac{Q}{\sigma^2\pi \| \boldsymbol{\omega} \|^{2}} \exp^{-2 \sigma^2 \pi^2 \| \boldsymbol{\omega} \| ^2}.
    \label{eq:fourier_transform_of_the_potential}
\end{equation}
Note how in Eq. \ref{eq:fourier_transform_of_the_potential}, smaller charges contribute more to the energy of high frequencies. In comparison, larger charges contribute to the low frequencies. We offer an extended derivation of the Fourier Transform of $\potential_{\objectdomain}(\vx)$ in Appx. \ref{appx:fourier}. Additionally, in Appx.~\ref{appx:varying_charges_vis}, we discuss and visualize charges with varying magnitudes/spreads, highlighting how they organize when approximating the implicit field.

\begin{figure}
    \centering
    \includegraphics[width=0.9\linewidth]{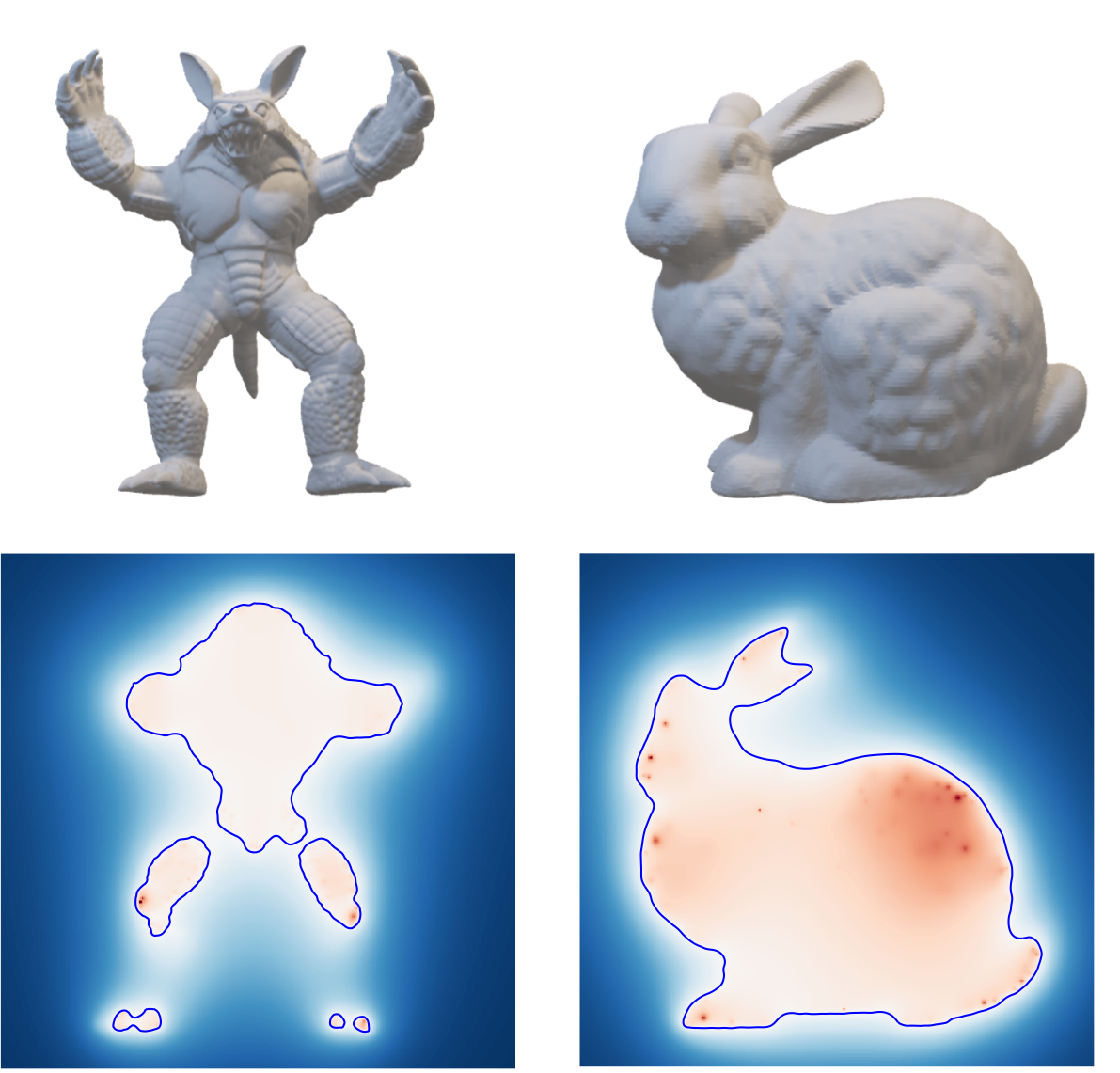}
    \caption{Cross-section of scalar fields computed with our method. Note how the zero-level set aligns consistently with the target surface.}
    \label{fig:cross_section}
    \vspace{-15pt}
\end{figure}

\section{Limitations}

We have identified some limitations in our approach that we will explore in future work. First, we use full supervision to optimize the charges' parameters. The choice of supervision limits the application of our approach when only partial views or RGB images are available. However, one can extend our method to predict color and thus use photometric loss functions to train with more realistic supervision, similar to Neural Rendering approaches. Another limitation we observed is that flat and concave surfaces are challenging for our method. With our Gaussian shape priors, it would require an infinite number of charges to reconstruct a shape \textit{perfectly}. However, EISR achieves reasonably good approximations by increasing the number of charges. Some potential future work is exploring different shape priors with approximation power for these types of surfaces by finding simplifying expressions for Eq. \ref{eq:potential_solution}, e.g., charge lines or planes. Finally, our method's efficiency bottleneck is the need to compute pairwise distances between the query point $\vx$ and each charge distribution's location $\vs_{i}$. In future work, we will explore alternatives to reduce computational cost and make EISR more efficient.



\section{Conclusions}

We introduced a new approach for shape representation, called Electrostatics-Inspired Surface Reconstruction (EISR), as an alternative to SDF-based surface reconstructions. Inspired by the extensive use of Poisson's PDE in computer vision and by its connection to electrostatics, we adopt it as a substitute for the Eikonal equation-based methods and solve the PDE analytically using Green's functions. Consequently, we found a closed-form expression for the PDE's solution in terms of the charge distribution, which we parameterize to recover the surface of the desired shape. Our approach demonstrated its ability to capture shape details as the number of charges increased, while maintaining a relatively low number of charges compared to the ground-truth mesh resolution. 




\bibliography{references}
\bibliographystyle{icml2026}

\newpage
\appendix
\onecolumn

\section{The Minimum Principle}\label{appx:min_principle}

This section provides the definition and proof of the \textit{minimum principle}. \textit{The Minimum principle} allows us to justify using Poisson's equation as an alternative to the Eikonal equation. It guarantees that under the conditions $\chargedensity(\vx) > 0 \; \forall \vx \in \sR^{3}$, the value of $\potential(\vx)$ inside a region a region $\objectdomain$ will be always greater or equal than the value of $\potential(\vx)$ on its boundary $\objectboundary$.

\begin{proposition}[Minimum principle] Let $\objectdomain$ be a closed region with boundary $\objectboundary$, and let $f(\vx)$ be a solution to the PDE 
\begin{equation}
\laplacian f(\vx) = g(\vx), \; \vx \in \sR^{3}.\label{eq:pde_min_principle}
\end{equation}
The \textit{minimum principle} states that, if $g(\vx) < 0\; \forall \vx \in \Omega$, then $f(\vx)$ takes its minimum value somewhere on $\objectboundary$ and not in the interior of the region\label{prop:min_principle}.

\end{proposition}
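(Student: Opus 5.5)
The plan is the classical second-derivative test, argued by contradiction. I will assume $f$ is at least $C^{2}$ on a neighbourhood of $\objectdomain$, since the statement needs $\laplacian f$ to make sense pointwise. Because $\objectdomain$ is closed and bounded (the paper takes $\objectdomain$ to be a bounded domain), it is compact. A continuous $f$ therefore attains its minimum on $\objectdomain$ at some point $\vx_{0}$. It remains to show that $\vx_{0}$ cannot lie in the interior $\objectdomain \setminus \objectboundary$; it then lies on $\objectboundary$.

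Suppose, for contradiction, that $\vx_{0}$ is an interior point. Then $f$ has a local minimum at $\vx_{0}$, so $\nabla f(\vx_{0}) = 0$. The Hessian $\mH f(\vx_{0})$ is positive semidefinite.

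To see this concretely, restrict $f$ to each coordinate line through $\vx_{0}$. The one-variable function $t \mapsto f(\vx_{0} + t\,e_{j})$ has a local minimum at $t=0$, so its second derivative there is nonnegative, i.e. $\partial^{2}_{x_j} f(\vx_{0}) \geq 0$ for $j=1,2,3$. Summing over $j$ gives
\begin{equation*}
\laplacian f(\vx_{0}) = \sum_{j=1}^{3} \partial^{2}_{x_j} f(\vx_{0}) \geq 0 .
\end{equation*}
By the PDE in Eq.~\ref{eq:pde_min_principle} we also have $\laplacian f(\vx_{0}) = g(\vx_{0}) < 0$. This is a contradiction, so no interior point can be a minimizer. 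Hence every minimizer of $f$ over $\objectdomain$ lies on $\objectboundary$.

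The main subtlety is the justification of the one-dimensional second-derivative step. One has to check that a local minimum of a $C^{2}$ function of one variable has nonnegative second derivative. This follows from Taylor's theorem: if $h'(0)=0$ and $h''(0)<0$, then $h(t) < h(0)$ for small $t \neq 0$, contradicting minimality.

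The other point I would flag is that the strict inequality $g<0$ is what makes this short argument work. With only $g \leq 0$ one would need the perturbation trick $f_{\varepsilon} = f - \varepsilon\|\vx\|^{2}$, for which $\laplacian f_{\varepsilon} = g - 6\varepsilon < 0$, followed by letting $\varepsilon \to 0$. That limiting argument yields only the weak statement that the minimum is attained on the boundary.

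In the paper's setting, $g = -\chargedensity/\epsilon_{0}$ with $\chargedensity > 0$, so the strict case applies directly. It gives exactly the consequence used in the main text: $\potential > \min_{\objectboundary}\potential$ in the interior. In particular $\potential > \tau$ inside $\objectdomain$ whenever $\potential = \tau$ on $\objectboundary$.
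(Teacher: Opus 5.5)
Your proof is correct and takes the same route as the paper: assume an interior minimizer, use the second-derivative test to get $\laplacian f(\vx_0)=\operatorname{Tr}\,\mH f(\vx_0)\ge 0$, and contradict $g<0$. Your version is the sounder one. The paper claims the Hessian at a minimum must be positive definite (and then invokes a Cholesky factorization), which fails at degenerate minima such as that of $\|\vx\|^{4}$ at the origin. You use only the valid necessary condition $\partial^{2}_{x_j} f(\vx_0)\ge 0$, which suffices precisely because $g<0$ is strict, and you also supply the compactness argument for the existence of a minimizer that the paper leaves implicit.
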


\begin{proof} We will proof Proposition \ref{prop:min_principle} by contradiction.
Let us assume that $f(\vx)$ is a solution to Eq. \ref{eq:pde_min_principle}. Additionally, suppose now that $f(\vx)$ reaches a minimum value $M = f(\vx^{*})$ for some $\vx^{*}$ inside $\Omega$. According to the second derivative test, $\vx^{*}$ is a minimum point of $f(\vx)$ if and only if its Hessian matrix $\mH(f)(\vx)$ is positive-definite when evaluated at $\vx^{*}$. If $\mH(f)(\vx^{*})$ is positive definite, then it has a Cholesky decomposition such that 
$$\mH(f)(\vx^{*}) = \mL\mL^{\intercal},$$ 
with $\mL$ a lower triangular matrix. Thus, each element in the main diagonal of $\mH(f)(\vx^{*})$ is strictly positive of the form 
$$\mH_{k,k} = \sum_{j=1}^{k} \mL_{k,j}^{2}.$$ 
This result implies that the trace of $\mH(f)(\vx^{*})$ is also strictly positive and then
$$
0 < Tr(\mH(f)(\vx^{*})) = \Delta f(\vx^{*}) = g(\vx^{*}) < 0,
$$
which is a contradiction for any point $\vx^{*} \in \Omega$. Consequently, $f(\vx)$ can only reach a minimum on $\objectboundary$. 


\end{proof}

To apply the \textit{minimum principle} to our electrostatic-inspired surface reconstruction method, we require all the charges to be positive such that the source function in Eq. \ref{eq:poisson_eq} is
$$ - \frac{\chargedensity(\vx)}{\epsilon_{0}} < 0\; \forall \vx \in \sR^{3}.$$
Therefore, our electrostatic potential takes its minimum value at each point $\vx$ on the boundary of the region of interest since $\objectboundary$ is a level-set of $\potential(\vx)$. Note that a similar analysis is possible for any level set.

\section{Full Experiments on Varying the Number of Charges}\label{apx:varying_gaussians}

Surface reconstruction with different numbers of Gaussian charges (shape priors) on a subset of the Stanford 3D Scanning repository. We present here the full table with the results, omitted in the main paper due to space restrictions. Our method shows significant results for all the metrics, even at a lower number of shape priors.

\begin{table*}[h]
    \centering
    \setlength{\tabcolsep}{1.7pt}
    \caption{Surface reconstruction with different numbers of Gaussian charges.}
    \label{tab:varying_num_charges_full}
    \scriptsize
    \begin{tabular}{lccccccccccccccccccc}
        \toprule
        \textbf{Model}  & \multicolumn{3}{c}{\textbf{1000 Charges}} & \multicolumn{3}{c}{\textbf{2000 Charges}} & \multicolumn{3}{c}{\textbf{5000 Charges}} & \multicolumn{3}{c}{\textbf{10000 Charges}} & \multicolumn{3}{c}{\textbf{20000 Charges}} & \multicolumn{3}{c}{\textbf{50000 Charges}} & \textbf{\# verts.}\\
        \cmidrule(lr){2-4}\cmidrule(lr){5-7}\cmidrule(lr){8-10}\cmidrule(lr){11-13}\cmidrule(lr){14-16}\cmidrule(lr){17-19}
         & F1($\uparrow$) & IoU($\uparrow$) & NC($\uparrow$) & F1($\uparrow$) & IoU($\uparrow$) & NC($\uparrow$) & F1($\uparrow$) & IoU($\uparrow$) & NC($\uparrow$) & F1($\uparrow$) & IoU($\uparrow$) & NC($\uparrow$) & F1($\uparrow$) & IoU($\uparrow$) & NC($\uparrow$) & F1($\uparrow$) & IoU($\uparrow$) & NC($\uparrow$) & \\
        \midrule
        Bunny & 97.123 & 0.978 & 0.992 & 97.703 & 0.979 & 0.994 & 97.964 & 0.982 & 0.995 & 98.090 & 0.984 & 0.995 & 98.075 & 0.984 & 0.996 & 98.036 & 0.984 & 0.995 & $\expnumber{3.6}{4}$ \\
        Dragon & 89.845 & 0.904 & 0.969 & 93.981 & 0.913 & 0.978 & 96.593 & 0.931 & 0.985 & 97.207 & 0.957 & 0.989 & 97.577 & 0.966 & 0.991 & 97.712 & 0.970 & 0.991 & $\expnumber{5.7}{5}$  \\
        Armadillo & 98.092 & 0.934 & 0.985 & 99.236 & 0.939 & 0.990 & 99.646 & 0.946 & 0.993 & 99.777 & 0.967 & 0.995 & 99.824 & 0.977 & 0.996 & 99.849 & 0.978 & 0.997 & $\expnumber{1.7}{5}$  \\
        Happy Buddha & 91.281 & 0.851 & 0.966 & 93.778 & 0.847 & 0.973 & 95.985 & 0.863 & 0.980 & 96.787 & 0.910 & 0.986 & 97.048 & 0.935 & 0.988 & 97.265 & 0.945 & 0.991 & $\expnumber{5.4}{5}$  \\
        Thai statue & 91.264 & 0.758 & 0.960 & 95.272 & 0.759 & 0.968 & 97.537 & 0.766 & 0.972 & 98.622 & 0.818 & 0.980 & 99.081 & 0.857 & 0.985 & 99.194 & 0.875 & 0.988 & $\expnumber{3.6}{6}$  \\
        Asian dragon & 90.835 & 0.795 & 0.958 & 92.035 & 0.771 & 0.959 & 95.240 & 0.764 & 0.966 & 96.235 & 0.762 & 0.968 & 96.459 & 0.815 & 0.980 & 96.829 & 0.856 & 0.984 & $\expnumber{1.9}{7}$  \\
        \bottomrule
    \end{tabular}
\end{table*}

\section{Fourier Transform of the Parameterized Implicit Field}\label{appx:fourier}

One of our method's main results is to express the implicit scalar field we use to reconstruct the target shape, as a convolution of the Green's function $G(\vx - \vs)$ and the charge density $\chargedensity(\vx)$. Thanks to this result and because Poisson's equation is a linear PDE, it is possible to derive an expression for the Fourier transform of $\potential_{\Omega}$.

First, recall that the convolution theorem allows us to express the potential function as
\begin{equation}
\FT{\potential_{\objectdomain}}(\boldsymbol{\omega}) = \FT{G \ast \chargedensity}(\boldsymbol{\omega}) =  \left[ \FT{G} \cdot \FT{\chargedensity} \right] (\boldsymbol{\omega}).\label{eq:conve_theorem}    
\end{equation}

Since we chose to parameterize $\potential(\vx)$ as a Gaussian charge density, its Fourier transform is also a Gaussian. We obtain the Fourier transform of the Green's function by using spectral differentiation such that
\begin{align}
\FT{\laplacian G}(\boldsymbol{\omega}) &= \FT{\delta}(\boldsymbol{\omega}) \nonumber \\
\| \boldsymbol{\omega} \|^{2} \FT{G}(\boldsymbol{\omega}) &= 1 \nonumber \\
\FT{G}(\boldsymbol{\omega}) &= \frac{1}{\| \boldsymbol{\omega} \|^{2}}\label{eq:fourier_transform_of_G} 
\end{align}
Computing the element-wise product of Eq. \ref{eq:conve_theorem} and \ref{eq:fourier_transform_of_G} for a single Gaussian charge density and taking  its magnitude yields the results shown in Eq. \ref{eq:fourier_transform_of_the_potential}:
$$| \FT{\potential}(\boldsymbol{\omega}) | = \frac{Q}{\sigma^2\pi \| \boldsymbol{\omega} \|^{2}} \exp^{-2 \sigma^2 \pi^2 \| \boldsymbol{\omega} \| ^2}.$$

This result shows how charges with small spreads denoted with parameter $\sigma$ contribute more to the energy of high-frequencies in $\FT{\potential}(\boldsymbol{\omega})$. In contrast, charges with a larger spread contribute to the low-frequencies.




\section{Visualization of Charge Distribution}\label{appx:varying_charges_vis}

In this section, we discuss and visualize charges with varying magnitudes/spreads. We show the location of the charges, $\vs_{i}$, after the optimization is complete. We highlight how they organize during the approximation of the implicit field and present results for progressively smaller charge magnitudes and charge spreads. We summarize these results in Fig. \ref{fig:charge_distribution}. Note how the charges naturally accumulate near the surface, concentrating on areas where finer details are needed. Similarly, note how larger charge spreads end up in areas of the target shape that can be approximated with low frequencies, e.g., the torso and the legs of the armadillo model.

\begin{figure*}
\centering
    \setlength\belowcaptionskip{-1.0\baselineskip}
     \centering     
     \includegraphics[width=1.0\textwidth, trim=12.0cm 0.0cm 12.0cm 0.0cm,clip]{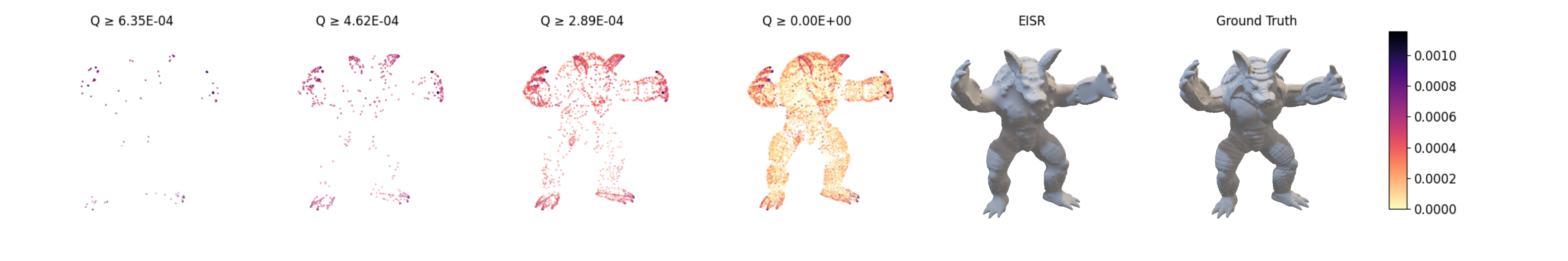}
    
    \small{a) Charge locations $Q_{i}$ after optimization, sorted by their magnitude.}   
    
     \includegraphics[width=1.0\textwidth, trim=12.0cm 0.0cm 12.0cm 0.0cm,clip]{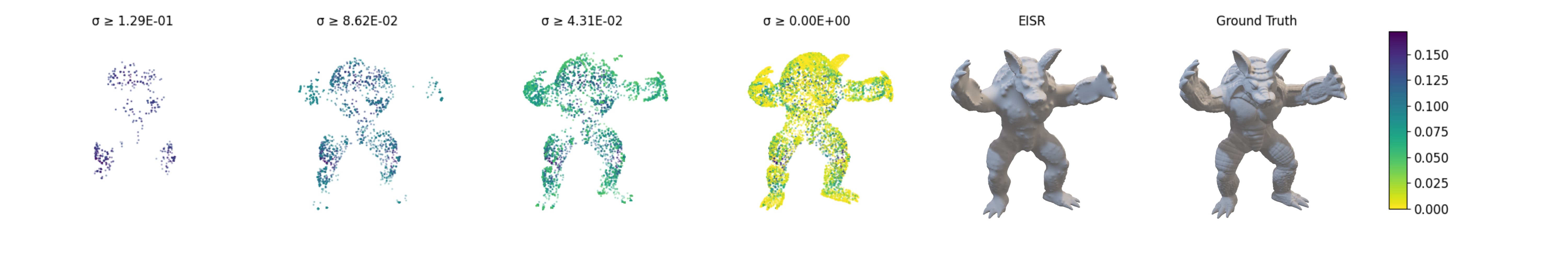}     
    
     \small{b) Charge spread $\sigma_{i}$ after optimization sorted by their magnitude.}           
     
    \caption{Final distribution of the charge's locations after optimization. Note how a) the charges tend to organize themselves near the surface and b) larger charge spreads are placed in areas where low-frequency details are needed.}\label{fig:charge_distribution}
\end{figure*}

\section{Additional Results}

We provide additional results to the experiments in the paper to further demonstrate the properties of our shape representation method.

\begin{figure*}
    \centering
    \includegraphics[height=0.95\textheight]{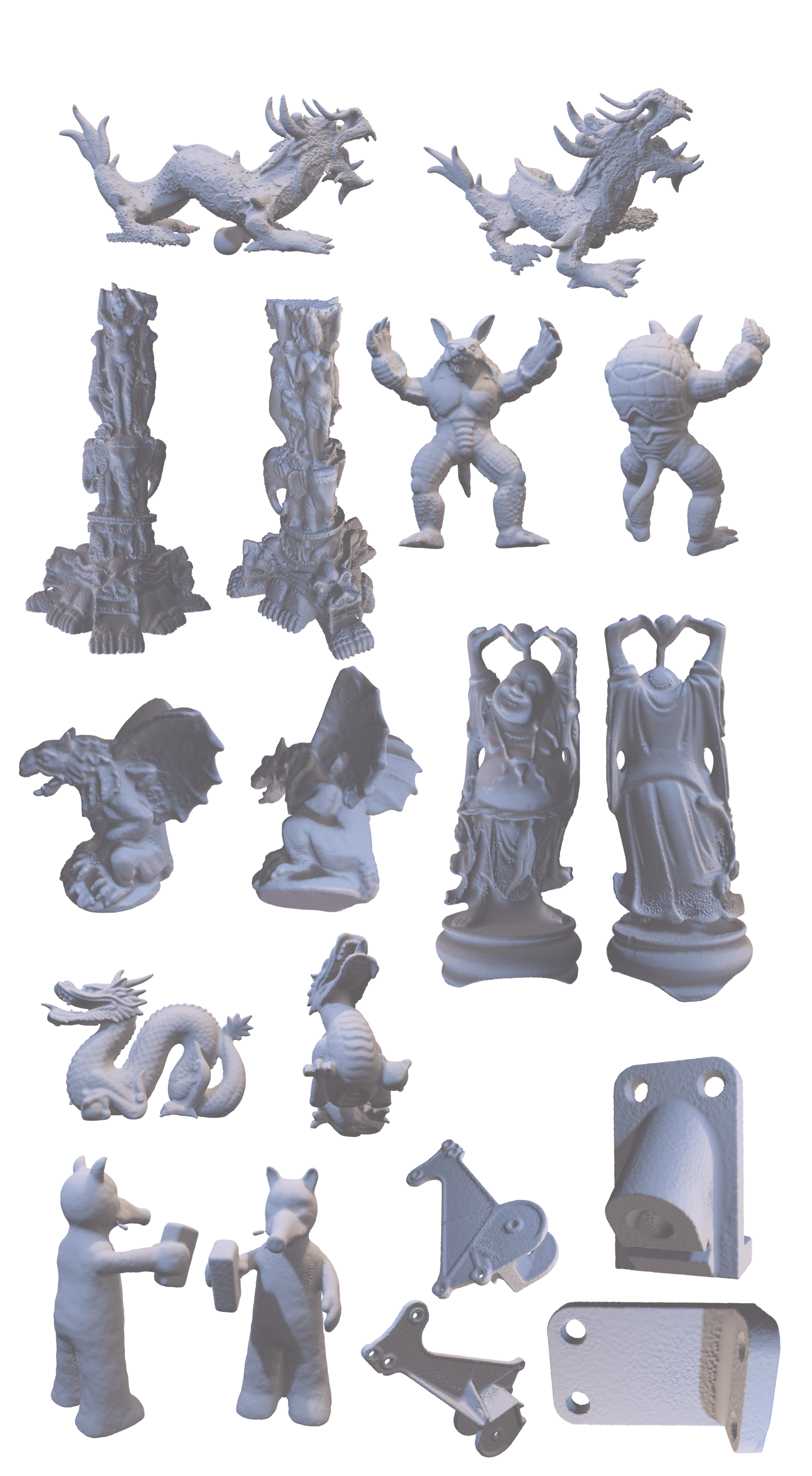}
    \caption{Surface Reconstruction additional results.}
    \label{fig:shapenet_objects}
\end{figure*}

We have attached five of the reconstructed meshes in \textit{*.obj} format for a better observation of the reconstruction accuracy. The reconstruction can be compared against the ground truth meshes from the Stanford 3D Scanning repository and the Williams et al. dataset.  The ground truth meshes need to be scaled to a unit cube for comparison. See attached files.





\end{document}